\newtheorem{theorem}{Theorem}
\newtheorem{lemma}{Lemma}
\newcommand*{\centerfloat}{%
  \parindent \z@
  \leftskip \z@ \@plus 1fil \@minus \textwidth
  \rightskip\leftskip
  \parfillskip \z@skip}
\begin{document}
%
\title{The FastMap Algorithm for Shortest Path Computations}
\author{Liron Cohen$^1$ \ Tansel Uras$^1$ \ Shiva Jahangiri$^2$ \ Aliyah Arunasalam$^1$ \ Sven Koenig$^1$ \ T. K. Satish Kumar$^1$
\\ \{lironcoh, turas, arunasal, skoenig\}@usc.edu, shivaj@uci.edu, tkskwork@gmail.com
\\ $^1$University of Southern California \ \ \ $^2$University of California, Irvine}
\maketitle

\begin{abstract}
We present a new preprocessing algorithm for embedding the nodes of a given edge-weighted undirected graph into a Euclidean space. The Euclidean distance between any two nodes in this space approximates the length of the shortest path between them in the given graph. Later, at runtime, a shortest path between any two nodes can be computed with A* search using the Euclidean distances as heuristic. Our preprocessing algorithm, called FastMap, is inspired by the data mining algorithm of the same name and runs in near-linear time. Hence, FastMap is orders of magnitude faster than competing approaches that produce a Euclidean embedding using Semidefinite Programming. FastMap also produces admissible and consistent heuristics and therefore guarantees the generation of shortest paths. Moreover, FastMap applies to general undirected graphs for which many traditional heuristics, such as the Manhattan Distance heuristic, are not well defined. Empirically, we demonstrate that A* search using the FastMap heuristic is competitive with A* search using other state-of-the-art heuristics, such as the Differential heuristic.
\end{abstract}

\section{Introduction and Related Work}

Shortest path computations commonly occur in the inner procedures of many AI programs. In video games, for example, a large fraction of CPU cycles is spent on shortest path computations \cite{UK:AIPRO:15}. Many other tasks in AI, including motion planning \cite{L:BOOK:06}, temporal reasoning \cite{D:BOOK:03}, and decision making \cite{RN:BOOK:09}, also involve finding and reasoning about shortest paths. While Dijkstra's algorithm \cite{DIJ:NM:59} can be used to compute shortest paths in polynomial time, speeding up shortest path computations allows one to solve the aformentioned tasks faster. One way to do that is to use A* search with an informed heuristic \cite{HNR:IEEE:68}.

A perfect heuristic is one that returns the true distance between any two nodes in a given graph. A* with such a heuristic and proper tie-breaking is guaranteed to expand nodes only on a shortest path between the given start and goal nodes. In general, computing the perfect heuristic between two nodes is as hard as computing the shortest path between them. Hence, A* search benefits from a perfect heuristic only if it is computed offline. However, precomputing all pairwise distances is not only time-intensive but also requires a prohibitive $O(N^2)$ memory where $N$ is the number of nodes. The memory requirements for storing all-pairs shortest paths data can be somewhat addressed through compression \cite{BH:ICAPS:13,SBH:JAIR:15}.

Existing methods for preprocessing a given graph (without precomputing all pairwise distances) can be grouped into the following categories: Hierarchical abstractions that yield suboptimal paths have been used to reduce the size of the search space by abstracting groups of nodes \cite{BMS:JGD:04,SB:AAAI:05}. More informed heuristics \cite{BH:AIIDE:06,C:IEEE:06,SFBS:AAAI:09} focus A* searches better, resulting in fewer expanded states. Hierarchies can also be used to derive heuristics during the search \cite{LRH:SOCS:08,HDPZM:CAI:94}. Dead-end detection and other pruning methods \cite{BH:AIIDE:06,GFSS:SOCS:10,PZR:AAAI:10} identify areas of the graph that do not need to be searched to find shortest paths. Search with contraction hierarchies \cite{GSSD:CEA:08} is an optimal hierarchical method, where every level of the hierarchy contains only a single node. It has been shown to be efficient on road networks but seems to be less efficient on graphs with higher branching factors, such as grid-based game maps \cite{S:AAI:13}. N-level graphs \cite{UK:AAAI:15}, constructed from undirected graphs by partitioning the nodes into levels also allow for significant pruning during the search.

A different approach that does not rely on preprocessing of the graph is to use some notion of a geometric distance between two nodes as a heuristic of the distance between them. One such heuristic for gridworlds is the Manhattan Distance heuristic.\footnote{In a 4-neighbor 2D gridworld, for example, the Manhattan Distance between two cells $(x_1,y_1)$ and $(x_2,y_2)$ is $|x_1-x_2|+|y_1-y_2|$. Generalizations exist for 8-neighbor 2D and 3D gridworlds.} For many gridworlds, A* search using the Manhattan Distance heuristic outperforms Dijkstra's algorithm. However, in complicated 2D/3D gridworlds like mazes, the Manhattan Distance heuristic may not be sufficiently informed to focus A* searches effectively. Another issue associated with Manhattan Distance-like heuristics is that they are not well defined for general graphs.\footnote{Henceforth, whenever we refer to a graph, we mean an edge-weighted undirected graph unless stated otherwise.} For a graph that cannot be conceived in a geometric space, there is no closed-form formula for a ``geometric'' heuristic for the distance between two nodes because there are no coordinates associated with them.

For a graph that does not already have a geometric embedding in Euclidean space, a preprocessing algorithm can be used to generate one. As described before, at runtime, A* search would then use the Euclidean distance between any two nodes in this space as an estimate for the distance between them in the given graph. One such approach is Euclidean Heuristic Optimization (EHO) \cite{RBS:AAAI:11}. EHO guarantees admissiblility and consistency and therefore generates shortest paths. However, it requires solving a Semidefinite Program (SDP). SDPs can be solved in polynomial time \cite{VB:SR:96}. EHO leverages additional structure to solve them in cubic time. Still, a cubic preprocessing time limits the size of the graphs that are amenable to this approach.

The Differential heuristic is another state-of-the-art approach that has the benefit of a near-linear runtime. However, unlike the approach in \cite{RBS:AAAI:11}, it does not produce an explicit Euclidean embedding. In the preprocessing phase of the Differential heuristic approach, some nodes of the graph are chosen as pivot nodes. The distances between each pivot node and every other node are precomputed and stored \cite{SFBS:AAAI:09}. At runtime, the heuristic between two nodes $a$ and $b$ is given by $\max_p |d(a,p)-d(p,b)|$, where $p$ is a pivot node and $d(\cdot,\cdot)$ is the precomputed distance. The preprocessing time is linear in the number of pivots times the size of the graph. The required space is linear in the number of pivots times the number of nodes, although a more succinct representation is presented in \cite{GSFS:AAAI:11}. Similar preprocessing techniques are used in Portal-Based True Distance heuristics \cite{GFSS:SOCS:10}.

In this paper, we present a new preprocessing algorithm, called FastMap, that produces an explicit Euclidean embedding while running in near-linear time. It therefore has the benefits of the small preprocessing time of the Differential heuristic approach and of producing an embedding from which a heuristic between two nodes can be quickly computed using a closed-form formula. Our preprocessing algorithm, dubbed FastMap, is inspired by the data mining algorithm of the same name \cite{FL:SIGMOD:95}. It is orders of magnitude faster than SDP-based approaches for producing Euclidean embeddings. FastMap also produces admissible and consistent heuristics and therefore guarantees the generation of shortest paths.

The FastMap heuristic has several advantages: First, it is defined for general (undirected) graphs. Second, we observe empirically that, in gridworlds, A* using the FastMap heuristic runs faster than A* using the Manhattan or Octile distance heuristics. A* using the FastMap heuristic runs equally fast or faster than A* using the Differential heuristic, with the same memory resources. The (explicit) Euclidean embedding produced by FastMap also has representational benefits like recovering the underlying manifolds of the graph and/or visualizing them. Moreover, we observe that the FastMap and Differential heuristics have complementary strengths and can be easily combined to generate an even more informed heuristic.

\section{The Origin of FastMap}
The FastMap algorithm \cite{FL:SIGMOD:95} was introduced in the data mining community for automatically generating geometric embeddings of abstract objects. For example, if we are given objects in form of long DNA strings, multimedia datasets such as voice excerpts or images, or medical datasets such as ECGs or MRIs, there is no geometric space in which these objects can be naturally visualized. However, there is often a well defined distance function between every pair of objects. For example, the \emph{edit distance}\footnote{The edit distance between two strings is the minimum number of insertions, deletions or substitutions that are needed to transform one to the other.} between two DNA strings is well defined although an individual DNA string cannot be conceptualized in geometric space.

Clustering techniques, such as the $k$-means algorithm, are well studied in machine learning \cite{A:BOOK:10} but cannot be applied directly to domains with abstract objects because they assume that objects are described as points in geometric space. FastMap revives their applicability by first creating a Euclidean embedding for the abstract objects that approximately preserves the pairwise distances between them. Such an embedding also helps to visualize the abstract objects, for example, to aid physicians in identifying correlations between symptoms from medical records.

The data mining FastMap gets as input a complete undirected edge-weighted graph $G=(V,E)$. Each node $v_i \in V$ represents an abstract object $O_i$. Between any two nodes $v_i$ and $v_j$ there is an edge $(v_i,v_j) \in E$ with weight $D(O_i,O_j)$ that corresponds to the given distance between objects $O_i$ and $O_j$. A Euclidean embedding assigns to each object $O_i$ a $K$-dimensional point $p_i \in \mathbb{R}^K$. A good Euclidean embedding is one in which the Euclidean distance between any two points $p_i$ and $p_j$ closely approximates $D(O_i,O_j)$.

One of the early approaches for generating such an embedding is based on the idea of \emph{multi-dimensional scaling} (MDS) \cite{T:PSY:52}. Here, the overall distortion of the pairwise distances is measured in terms of the ``energy'' stored in ``springs'' that connect each pair of objects. MDS, however, requires $O(|V|^2)$ time and hence does not scale well in practice. On the other hand, FastMap \cite{FL:SIGMOD:95} requires only linear time. Both methods embed the objects in a $K$-dimensional space for a user-specified $K$.

FastMap works as follows: In the very first iteration, it heuristically identifies the farthest pair of objects $O_a$ and $O_b$ in linear time. It does this by initially choosing a random object $O_b$ and then choosing $O_a$ to be the farthest object away from $O_b$. It then reassigns $O_b$ to be the farthest object away from $O_a$. Once $O_a$ and $O_b$ are determined, every other object $O_i$ defines a triangle with sides of lengths $d_{ai}=D(O_a,O_i)$, $d_{ab}=D(O_a,O_b)$ and $d_{ib}=D(O_i,O_b)$. Figure \ref{fig1} shows this triangle. The sides of the triangle define its entire geometry, and the projection of $O_i$ onto $O_a O_b$ is given by $x_i = (d_{ai}^2 + d_{ab}^2 - d_{ib}^2) / (2d_{ab})$. FastMap sets the first coordinate of $p_i$, the embedding of object $O_i$, to $x_i$. In particular, the first coordinate of $p_a$ is $x_a=0$ and of $p_b$ is $x_b=d_{ab}$. Computing the first coordinates of all objects takes only linear time since the distance between any two objects $O_i$ and $O_j$ for $i,j \notin \{a,b\}$ is never computed.

\begin{figure}[t]
  \centering
  \includegraphics[width=0.3\textwidth]{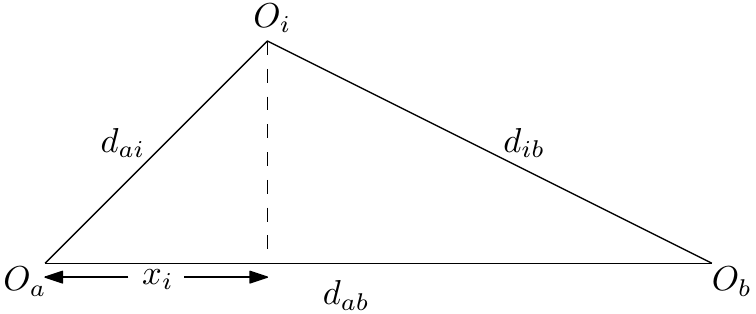}
  \caption{The three sides of a triangle define its entire geometry. In particular, $x_i = (d_{ai}^2 + d_{ab}^2 - d_{ib}^2) / (2d_{ab})$.}
    \label{fig1}
\end{figure}

In the subsequent $K-1$ iterations, the same procedure is followed for computing the remaining $K-1$ coordinates of each object. However, the distance function is adapted for different iterations. For example, for the first iteration, the coordinates of $O_a$ and $O_b$ are $0$ and $d_{ab}$, respectively. Because these coordinates fully explain the true distance between them, from the second iteration onwards, the rest of $p_a$ and $p_b$'s coordinates should be identical. Intuitively, this means that the second iteration should mimic the first one on a hyperplane that is perpendicular to the line $O_a O_b$. Figure \ref{fig2} explains this intuition. Although the hyperplane is never constructed explicitly, its conceptualization implies that the distance function for the second iteration should be changed in the following way: $D_{new}(O'_i,O'_j)^2 = D (O_i, O_j)^2 - (x_i - x_j)^2$. Here, $O'_i$ and $O'_j$ are the projections of $O_i$ and $O_j$, respectively, onto this hyperplane, and $D_{new}$ is the new distance function.

\begin{figure}[t]
  \centering
  \includegraphics[width=0.33\textwidth]{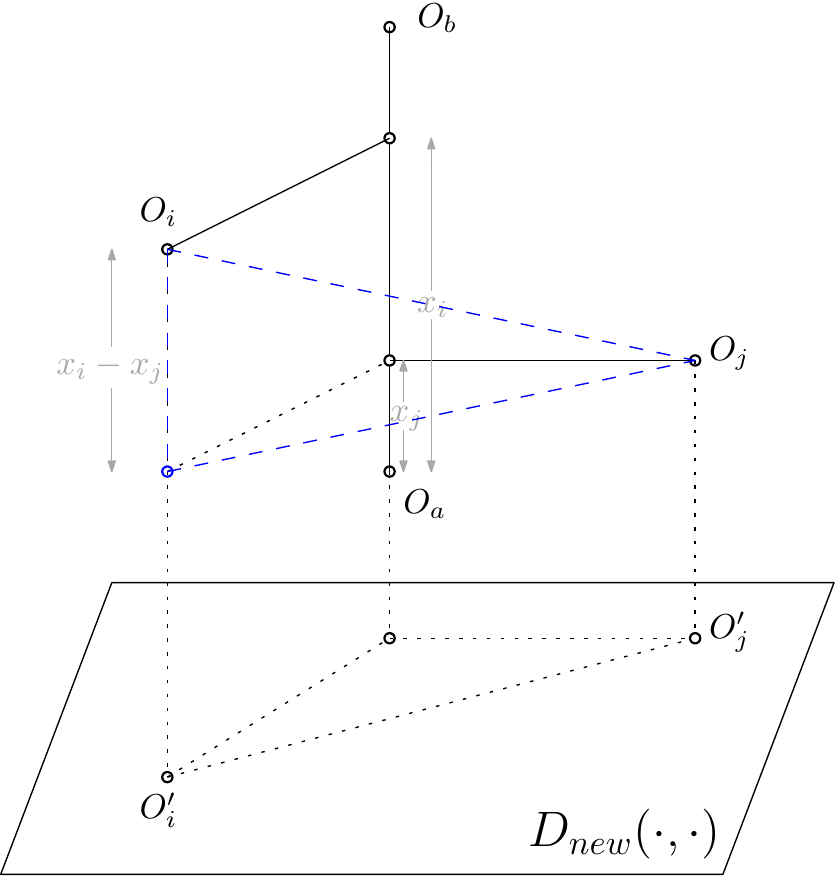}
  \caption{Shows a geometric conceptualization of the recursive step in FastMap. In particular, $D_{new}(O'_i,O'_j)^2 = D(O_i, O_j)^2 - (x_i - x_j)^2$.}
    \label{fig2}
\end{figure}

\section{FastMap for Shortest Path Computations}
In this section, we provide the high-level ideas for how to adapt the data mining FastMap algorithm to shortest path computations. In the shortest path computation problem, we are given a non-negative edge-weighted undirected graph $G=(V,E)$ along with a start node $v_s$ and a goal node $v_g$. As a preprocessing technique, we can embed the nodes of $G$ in a Euclidean space. As A* searches for a shortest path from $v_s$ to $v_g$, it can use the Euclidean distance from $v \in V$ to $v_g$ as a heuristic for $v$. The number of node expansions of A* search depends on the informedness of the heuristic which, in turn, depends on the ability of the embedding to preserve the pairwise distances.

The idea is to view the nodes of $G$ as the objects to be embedded in Euclidean space. As such, the data mining FastMap algorithm cannot directly be used for generating an embedding in linear time. The data mining FastMap algorithm assumes that the distance $d_{ij}$ between two objects $O_i$ and $O_j$ can be computed in constant time, independent of the number of objects. Computing the distance between two nodes depends on the size of the graph. Another problem is that the Euclidean distances may not satisfy important properties such as \emph{admissibility} or \emph{consistency}. Admissibility guarantees that A* finds shortest paths while consistency allows A* to avoid re-expansions of nodes as well.

The first issue of having to retain (near-)linear time complexity can be addressed as follows: In each iteration, after we identify the farthest pair of nodes $O_a$ and $O_b$, the distances $d_{ai}$ and $d_{ib}$ need to be computed for \emph{all} other nodes $O_i$. Computing $d_{ai}$ and $d_{ib}$ for any single node $O_i$ can no longer be done in constant time but requires $O(|E|+|V| \log |V|)$ time instead \cite{TF:SFCS:84}. However, since we need to compute these distances for all nodes, computing two shortest path trees rooted at nodes $O_a$ and $O_b$ yields all necessary distances. The complexity of doing so is also $O(|E|+|V| \log |V|)$, which is only linear in the size of the graph.\footnote{unless $|E|=O(|V|)$, in which case the complexity is near-linear in the size of the input because of the $\log|V|$ factor} The amortized complexity for computing $d_{ai}$ and $d_{ib}$ for any single node $O_i$ is therefore near-constant time.

The second issue of having to generate a consistent (and thus admissible) heuristic is formally addressed in Theorem \ref{thm_consistency}. The idea is to use $L_1$ distances instead of $L_2$ distances in each iteration of FastMap. The mathematical properties of the $L_1$ distance can be used to prove that admissibility and consistency hold irrespective of the dimensionality of the embedding.

\RestyleAlgo{algoruled}
    \begin{algorithm}
      \KwIn{$G = (V,E,w)$, $K_{max}$ and $\epsilon$.}
      \KwOut{$K$ and $p_i \in \mathbb{R}^K$ for all $v_i \in V$.}
      \BlankLine
	  $w' = w$; $\ \ K = 1$\;
	  \While{$K_{max} > 0$}{
	    Let $G' = (V, E, w')$\;
	    $(n_a, n_b) \leftarrow $ GetFarthestPair($G'$)\;
	    Compute shortest path trees rooted at $n_a$ and $n_b$ on $G'$ to obtain $d_{ab}$, $d_{ai}$ and $d_{ib}$ for all $v_i \in V$\;
	  	\If{$d_{ab} < \epsilon$}{
	  	  Break\;
	  	}
	  	\For{each $v \in V$}{
	  	  $[p_v]_K = (d_{av} + d_{ab} - d_{vb})/2$
	  	}
	  	\For{each edge $(u,v) \in E$}{
			 $w'(u,v) = w'(u,v) - | [p_u]_K - [p_v]_K |$\;
	  	}
	  	$K=K+1$; $\ \ K_{max}=K_{max}-1$\;
	  }
  \caption{Shows the FastMap algorithm. $G = (V,E,w)$ is a non-negative edge-weighted undirected graph; $K_{max}$ is the user-specified upper bound on the dimensionality; $\epsilon$ is a user-specified threshold; $K \leq K_{max}$ is the dimensionality of the computed embedding; $p_i$ is the Euclidean embedding of node $v_i \in V$. Line $11$ is equivalent to $w'(u,v) = w(u,v) - \lVert p_u - p_v \rVert_1$.}
  \label{alg1}
\end{algorithm}

Algorithm \ref{alg1} presents data mining FastMap adapted to the shortest path computation problem. The input is an edge-weighted undirected graph $G = (V,E,w)$ along with two user-specified parameters $K_{max}$ and $\epsilon$. $K_{max}$ is the maximum number of dimensions allowed in the Euclidean embedding. It bounds the amount of memory needed to store the Euclidean embedding of any node. $\epsilon$ is the threshold that marks a point of diminishing returns when the distance between the farthest pair of nodes becomes negligible. The output is an embedding $p_i \in \mathbb{R}^K$ (with $K \leq K_{max}$) for each node $v_i \in V$.

The algorithm maintains a working graph $G'=(V, E, w')$ initialized to $G$. The nodes and edges of $G'$ are always identical to those of $G$ but the weights on the edges of $G'$ change with every iteration. In each iteration, the farthest pair ($n_a,n_b$) of nodes in $G'$ is heuristically identified in near-linear time (line $4$). The $K^{th}$ coordinate $[p_i]_K$ of each node $v_i$ is computed using a formula similar to that for $x_i$ in Figure \ref{fig1}. However, that formula is modified to $(d_{ai} + d_{ab} - d_{ib})/2$ to ensure admissibility and consistency of the heuristic. In each iteration, the weight of each edge is decremented to resemble the update rule for $D_{new}$ in Figure \ref{fig2} (line $11$). However, that update rule is modified to $w'(u,v) = w'(u,v) - | [p_u]_K - [p_v]_K |$ to use the $L_1$ distances instead of the $L_2$ distances.

The method GetFarthestPair($G'$) (line $4$) computes shortest path trees in $G'$ a small constant number of times, denoted by $\tau$.\footnote{$\tau=10$ in our experiments.} It therefore runs in near-linear time. In the first iteration, we assign $n_a$ to be a random node. A shortest path tree rooted at $n_a$ is computed to identify the farthest node from it. $n_b$ is assigned to be this farthest node. In the next iteration, a shortest path tree rooted at $n_b$ is computed to identify the farthest node from it. $n_a$ is reassigned to be this farthest node. Subsequent iterations follow the same switching rule for $n_a$ and $n_b$. The final assignments of nodes to $n_a$ and $n_b$ are returned after $\tau$ iterations. This entire process of starting from a randomly chosen node can be repeated a small constant number of times.\footnote{This constant is also $10$ in our experiments.}

Figure \ref{fig_example} shows the working of our algorithm on a small gridworld example.

\begin{figure*}[!]
\centerfloat
 
  \begin{subfigure}[b]{0.2\textwidth}
    \centering
	\includegraphics[width=0.9\textwidth]{}
    \caption{}
  \end{subfigure}
  ~
  \begin{subfigure}[b]{0.2\textwidth}
    \centering
	\includegraphics[width=0.9\textwidth]{}
    \caption{}
  \end{subfigure}
  ~
  \begin{subfigure}[b]{0.2\textwidth}
    \centering
	\includegraphics[width=0.9\textwidth]{}
    \caption{}
  \end{subfigure}
  ~
  \begin{subfigure}[b]{0.2\textwidth}
    \centering
	\includegraphics[width=0.9\textwidth]{}
    \caption{}
  \end{subfigure}
  \begin{subfigure}[b]{0.2\textwidth}
    \centering
	\includegraphics[width=0.9\textwidth]{}
    \caption{}
  \end{subfigure}
  
  \vspace{0.5cm}
  
  \begin{subfigure}[b]{0.2\textwidth}
    \centering
	\includegraphics[width=0.9\textwidth]{}
    \caption{}
  \end{subfigure}
  ~
  \begin{subfigure}[b]{0.2\textwidth}
    \centering
	\includegraphics[width=0.9\textwidth]{}
    \caption{}
  \end{subfigure}
  ~
  \begin{subfigure}[b]{0.2\textwidth}
    \centering
	\includegraphics[width=0.9\textwidth]{}
    \caption{}
  \end{subfigure}
  ~
  \begin{subfigure}[b]{0.2\textwidth}
    \centering
	\includegraphics[width=0.9\textwidth]{}
    \caption{}
  \end{subfigure}
  ~
  \begin{subfigure}[b]{0.2\textwidth}
    \centering
	\includegraphics[width=0.9\textwidth]{}
    \caption{}
  \end{subfigure}
  
  \caption{Illustrates the working of FastMap. (a) shows a $4$-neighbor gridworld with obstacles in black. (b) shows the graphical representation of (a) with the original unit weights on the edges. (c) shows the identified farthest pair of nodes. (d) shows two numbers in each cell representing the distances from $n_a$ and $n_b$, respectively. (e) shows the first coordinate produced for each cell. (f) shows new edge weights for the next iteration. (g), (h) and (i) correspond to (c), (d) and (e), respectively, in the second iteration. (j) shows the produced $2$D embedding.}
  \label{fig_example}
\end{figure*}

\subsection{Proof of Consistency}
In this section, we prove the consistency of the FastMap heuristic. Since consistency implies admissibility, this also proves that A* with the FastMap heuristic returns shortest paths. We use the following notation in the proofs: $w^i_{xy}$ is the weight on the edge between nodes $x$ and $y$ in the $i^{th}$ iteration; $d^i_{xy}$ is the distance between nodes $x$ and $y$ in the $i^{th}$ iteration (using the weights $w^i$); $p_x$ is the vector of coordinates produced for node $x$, and $[p_x]_j$ is its $j^{th}$ coordinate;\footnote{The $i^{th}$ iteration sets the value of $[p_x]_i$.} $h^i_{xy}$ is the FastMap heuristic between nodes $x$ and $y$ after $i$ iterations. Note that $h^i_{xy}$ is the $L_1$ distance between $p_x$ and $p_y$ at iteration $i$, that is $h^i_{xy} \vcentcolon= \sum_{j=1}^{i} |[p_x]_j - [p_y]_j|$. We also define $\Delta^{i+1}_{xy} \vcentcolon= d^i_{xy} - d^{i+1}_{xy}$. In the following proofs, we use the fact that $|A|+|B| \geq |A+B|$ and $|A|-|B| \leq |A-B|$.

\begin{lemma}
  For all $x$, $y$ and $i$, $d^i_{xy} \geq 0$.
\end{lemma}
\begin{proof}
  We prove by induction that in any iteration $i$, $w^i_{uv} \geq 0$ for all $(u,v) \in E$. Thus, the weight of each edge in the $i^{th}$ iteration is non-negative and therefore $d^i_{uv} \geq 0$ for all $u$, $v$. For the base case, $w^1_{uv} = w(u,v) \geq 0$. We assume that $w^{i}_{uv} \geq 0$ and show that $w^{i+1}_{uv} \geq 0$. Let $n_a$ and $n_b$ be the farthest pair of nodes identified in the $i^{th}$ iteration. From lines $9$ and $11$, $w^{i+1}_{uv} = w^i_{uv} - | (d^i_{au} - d^i_{av}) / 2 + (d^i_{vb} - d^i_{ub}) / 2|$. To show that $w^{i+1}_{uv} \geq 0$ we show that $w^i_{uv} \geq | (d^i_{au} - d^i_{av}) / 2 + (d^i_{vb} - d^i_{ub}) / 2|$. From the triangle inequality, for any node $l$, $d^i_{uv} + \min(d^i_{ul},d^i_{lv}) \geq \max(d^i_{ul},d^i_{lv})$. Therefore, $d^i_{uv} \geq |d^i_{lv} - d^i_{ul}|$. This means that $d^i_{uv} \geq |d^i_{au} - d^i_{av}|/2 + |d^i_{vb} - d^i_{ub}|/2$. Therefore, $d^i_{uv} \geq | (d^i_{au} - d^i_{av}) / 2 + (d^i_{vb} - d^i_{ub}) / 2|$. This concludes the proof since $w^i_{uv} \geq d^i_{uv}$.
\end{proof}

\begin{lemma}
  For all $x$, $y$ and $i$, $\Delta^{i+1}_{xy} \geq | [p_x]_i - [p_y]_i |$.
\end{lemma}
\begin{proof}
  Let $\langle u_1=x,\ldots,u_m=y \rangle$ be the shortest path from $x$ to $y$ in iteration $i$. By definition, $d^i_{xy} = \sum_{j=1}^{m-1} w^i_{u_j u_{j+1}}$ and $d^{i+1}_{xy} \leq \sum_{j=1}^{m-1} w^{i+1}_{u_j u_{j+1}}$. From line $11$, $w^{i+1}_{u_j u_{j+1}} = w^i_{u_j u_{j+1}} - | [p_{u_j}]_i - [p_{u_{j+1}}]_i |$. Therefore, $\Delta^{i+1}_{xy} = d^i_{xy} - d^{i+1}_{xy} \geq \sum_{j=1}^{m-1} | [p_{u_j}]_i - [p_{u_{j+1}}]_i |$. This concludes the proof since $\sum_{j=1}^{m-1} | [p_{u_j}]_i - [p_{u_{j+1}}]_i | \geq | \sum_{j=1}^{m-1} [p_{u_j}]_i - [p_{u_{j+1}}]_i | = | [p_x]_i - [p_y]_i |$.
\end{proof}

\begin{lemma}
  For all $x$, $y$, $g$ and $i$, $d^1_{xy} + h^i_{yg} - h^i_{xg} \geq d^{i+1}_{xy}$.
\end{lemma}
\begin{proof}
  We prove the lemma by induction on $i$. The base case for $i=1$ is implied by Lemma $2$. We assume that $d^1_{xy} + h^i_{yg} - h^i_{xg} \geq d^{i+1}_{xy}$ and show $d^1_{xy} + h^{i+1}_{yg} - h^{i+1}_{xg} \geq d^{i+2}_{xy}$. We know that $h^{i+1}_{yg} - h^{i+1}_{xg} = h^i_{yg} - h^i_{xg} - (|[p_x]_{i+1}-[p_g]_{i+1}| - |[p_y]_{i+1}-[p_g]_{i+1}|)$. Since $|[p_x]_{i+1}-[p_g]_{i+1}| - |[p_y]_{i+1}-[p_g]_{i+1}| \leq |[p_x]_{i+1} - [p_y]_{i+1}|$, we have $h^{i+1}_{yg} - h^{i+1}_{xg} \geq h^i_{yg} - h^i_{xg} - |[p_x]_{i+1} - [p_y]_{i+1}|$. Hence, $d^1_{xy} + h^{i+1}_{yg} - h^{i+1}_{xg} \geq (d^1_{xy} + h^i_{yg} - h^i_{xg}) - |[p_x]_{i+1} - [p_y]_{i+1}|$. Using the inductive assumption, we get $d^1_{xy} + h^{i+1}_{yg} - h^{i+1}_{xg} \geq d^{i+1}_{xy} - |[p_x]_{i+1} - [p_y]_{i+1}|$. By definition, $d_{xy}^{i+1} = \Delta_{xy}^{i+2} + d_{xy}^{i+2}$. Substituting for $d_{xy}^{i+1}$, we get $d^1_{xy} + h^{i+1}_{yg} - h^{i+1}_{xg} \geq d_{xy}^{i+2} +(\Delta_{xy}^{i+2} - |[p_x]_{i+1} - [p_y]_{i+1}|)$. Lemma $2$ shows that $\Delta_{xy}^{i+2} \geq |[p_x]_{i+1} - [p_y]_{i+1}|$, which concludes the proof.
\end{proof}

\begin{theorem}
  The FastMap heuristic is consistent.
  \label{thm_consistency}
\end{theorem}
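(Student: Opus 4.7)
The plan is to derive consistency directly from Lemma 3, which already does essentially all the work; the theorem is just the final packaging step. Recall that consistency requires showing, for every edge $(x,y) \in E$ and every goal node $g$, the inequality $h^K_{xg} \leq w(x,y) + h^K_{yg}$, where $K$ is the final dimensionality of the embedding produced by Algorithm 1. Equivalently, I need to show $w(x,y) + h^K_{yg} - h^K_{xg} \geq 0$.

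First I would invoke Lemma 3 with $i = K$, giving $d^1_{xy} + h^K_{yg} - h^K_{xg} \geq d^{K+1}_{xy}$. Next I would observe two elementary facts that let me upgrade this inequality into consistency. On the left, since $(x,y)$ is an actual edge of $G$ and $w^1 = w$, the shortest-path distance $d^1_{xy}$ in the original graph is bounded above by the direct edge weight: $d^1_{xy} \leq w(x,y)$. On the right, Lemma 1 gives $d^{K+1}_{xy} \geq 0$. Chaining these together yields $w(x,y) + h^K_{yg} - h^K_{xg} \geq d^1_{xy} + h^K_{yg} - h^K_{xg} \geq d^{K+1}_{xy} \geq 0$, which is exactly consistency.

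I would also briefly note that $h^K_{gg} = 0$ follows immediately from the definition of $h^K$ as an $L_1$ distance between the embedded coordinates, so consistency implies admissibility in the standard way (by induction along any shortest path from $x$ to $g$). There is no real obstacle to overcome here: the two genuinely delicate steps were already handled in Lemmas 2 and 3, whose whole purpose was to track how much slack the edge reweighting in line 11 introduces and to show that the $L_1$ heuristic grows in lockstep with the loss in graph distance. The only conceptual point worth emphasizing in the write-up is the role of the bound $d^1_{xy} \leq w(x,y)$, which is what converts Lemma 3's statement about shortest-path distances into a statement about raw edge weights, i.e., the consistency condition in its usual form.
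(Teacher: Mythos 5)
Your proposal is correct and follows essentially the same route as the paper: invoke Lemma 3, apply Lemma 1 to drop the $d^{i+1}_{xy}$ term, and note $h_{gg}=0$. The only cosmetic difference is that you explicitly add the bound $d^1_{xy} \leq w(x,y)$ to phrase consistency over raw edge weights, whereas the paper states the (slightly stronger) inequality directly in terms of the shortest-path distance $d^1_{xy}$.
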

\begin{proof}
  For all $x$, $y$, $g$ and $i$: From Lemma $3$, we know $d^1_{xy} + h^i_{yg} - h^i_{xg} \geq d^{i+1}_{xy}$. From Lemma $1$, we know $d^{i+1}_{xy} \geq 0$. Put together, we have $d^1_{xy} + h^i_{yg} \geq h^i_{xg}$. Finally, $h^i_{gg} = \sum_{j=1}^{i} |[p_g]_j - [p_g]_j| = 0$.
\end{proof}

\begin{theorem}
  The informedness of the FastMap heuristic increases monotonically with the number of dimensions.
  \label{thm_informedness}
\end{theorem}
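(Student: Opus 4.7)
The plan is to unpack what ``informedness increases monotonically'' means and then show the inequality falls out essentially for free from the $L_1$ definition of the heuristic. Recall that $h^i_{xy} = \sum_{j=1}^{i} |[p_x]_j - [p_y]_j|$, so by construction
\[
  h^{i+1}_{xy} \;=\; h^i_{xy} + \bigl|[p_x]_{i+1} - [p_y]_{i+1}\bigr|.
\]
Since the added term is an absolute value, it is non-negative, and therefore $h^{i+1}_{xy} \geq h^i_{xy}$ for every pair $x,y$. That handles the ``monotone increase'' part of the claim.

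For ``informedness'' to be a meaningful property (as opposed to merely inflating an inadmissible estimate), we must also verify that each $h^i$ remains admissible with respect to the true shortest-path distance $d^1_{xy}$ in the input graph $G$. This is exactly what Theorem~\ref{thm_consistency} together with Lemma~$1$ and Lemma~$3$ gives us: applying Lemma~$3$ with $y\leftarrow x$, $x\leftarrow g$ (or directly chaining Lemma~$3$ with $d^{i+1}_{xy}\geq 0$) yields $d^1_{xg} \geq h^i_{xg}$ for every $i$. So every $h^i$ is an admissible lower bound on $d^1$, and the sequence $h^1_{xg}, h^2_{xg}, \ldots$ is a non-decreasing sequence of admissible estimates, bounded above by the true distance. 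This is precisely the standard notion that $h^{i+1}$ dominates (is at least as informed as) $h^i$.

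I would therefore structure the proof as a two-sentence argument: (i) the $L_1$ form of the heuristic makes $h^{i+1}_{xy} - h^i_{xy} = |[p_x]_{i+1} - [p_y]_{i+1}| \geq 0$, and (ii) admissibility of each $h^i$ follows from Theorem~\ref{thm_consistency}, so the added dimension can only tighten (never loosen) the heuristic bound.

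There is no real obstacle here; the only thing to be careful about is ensuring the reader interprets ``informedness'' in the dominance sense (pointwise larger admissible heuristic), since otherwise the statement could sound stronger than it is. It is worth emphasizing that the monotonicity is weak: a new dimension can leave $h$ unchanged on some pairs (when $[p_x]_{i+1}=[p_y]_{i+1}$), but it never decreases the estimate anywhere.
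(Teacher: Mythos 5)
Your proof is correct and takes essentially the same approach as the paper: the paper's entire argument is the one-line observation that $h^{i+1}_{xg} = h^i_{xg} + |[p_x]_{i+1} - [p_g]_{i+1}| \geq h^i_{xg}$, which is exactly your step (i). Your additional remarks --- that admissibility of each $h^i$ (from Theorem~\ref{thm_consistency}) is what makes the dominance interpretation of ``informedness'' meaningful, and that the monotonicity is only weak --- are sound and arguably make the statement more precise than the paper's own proof, but they do not constitute a different route.
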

\begin{proof}
  This theorem follows from the fact that for any two nodes $x$ and $g$, $h^{i+1}_{xg} = h^i_{xg} + |[p_x]_{i+1} - [p_g]_{i+1}| \geq h^i_{xg}$.
\end{proof}

\section{Experimental Results}
We performed experiments on many benchmark maps from \cite{S:HOG:12}. Figure \ref{fig_experiments} presents representative results. The FastMap heuristic (FM) and the Differential heuristic (DH) with equal memory resources\footnote{The dimensionality of the Euclidean embedding for FM matches the number of pivots in DH.} are compared against each other. In addition, we include the Octile heuristic (OCT) as a baseline, that also uses a closed-form formula for the computation of its heuristic.

We observe that, as the number of dimensions increases, (a) FM and DH perform better than OCT; (b) the median number of expanded nodes when using the FM heuristic decreases (which is consistent with Theorem \ref{thm_informedness}); and (c) the median absolute deviation (MAD) of the number of expanded nodes when using the FM heuristic decreases. When FM's MADs are high, the variabilities can possibly be exploited in future work using Rapid Randomized Restart strategies.

FastMap also gives us a framework of identifying a point of diminishing returns with increasing dimensionality. This happens when the distance between the farthest pair of nodes stops being ``significant''. For example, such a point is observed in Figure \ref{fig_experiments}(f) around dimensionality $5$.\footnote{The distances between the farthest pair of nodes, computed on line $4$ of Algorithm \ref{alg1}, for the first $10$ dimensions are: $\langle 581, 36, 22, 15, 14, 10, 6, 6, 5, 4 \rangle$.}

In mazes, such as in Figure \ref{fig_experiments}(g), A* using the DH heuristic outperforms A* using the FM heuristic. This leads us to believe that FM provides good heuristic guidance in domains that can be approximated with a low-dimensional manifold. This observation also motivates us to create a hybrid FM+DH heuristic by taking the maximum of the two heuristics. Some relevant results are shown in Table \ref{table1}. We use FM($K$) to denote the FM heuristic with $K$ dimensions and DM($K$) to denote the DH heuristic with $K$ pivots. For the results in Table \ref{table1}, all heuristics have equal memory resources. We observe that the number of node expansions of A* using the FM($5$)+DH($5$) heuristic is always second best compared to A* using the FM($10$) heuristic and A* using the DH($10$) heuristic. On one hand, this decreases the percentages of instances on which it expands the least number of nodes (as seen in the second row of Table \ref{table1}). But, on the other hand, its median number of node expansions is not far from that of the best technique in each breakdown.

\begin{table*}[]
\centering
\resizebox{\linewidth}{!}{%
\begin{tabular}{|c|r|r|r|r|r|r|r|r|r|r|r|r|r|r|r|r|r|r|}
\hline
Map         & \multicolumn{6}{c|}{`lak503d'}                                                                                                                                  & \multicolumn{6}{c|}{`brc300d'}                                                                                                                                  & \multicolumn{6}{c|}{`maze512-32-0'}                                                                                                                             \\ \hline
            & \multicolumn{2}{c|}{FM-WINS 570}                    & \multicolumn{2}{c|}{DH-WINS 329}                    & \multicolumn{2}{c|}{FM+DH-WINS 101}                 & \multicolumn{2}{c|}{FM-WINS 846}                    & \multicolumn{2}{c|}{DH-WINS 147}                    & \multicolumn{2}{c|}{FM+DH-WINS 7}                   & \multicolumn{2}{c|}{FM-WINS 382}                    & \multicolumn{2}{c|}{DH-WINS 507}                    & \multicolumn{2}{c|}{FM+DH-WINS 111}                 \\ \hline
            & \multicolumn{1}{c|}{Med} & \multicolumn{1}{c|}{MAD} & \multicolumn{1}{c|}{Med} & \multicolumn{1}{c|}{MAD} & \multicolumn{1}{c|}{Med} & \multicolumn{1}{c|}{MAD} & \multicolumn{1}{c|}{Med} & \multicolumn{1}{c|}{MAD} & \multicolumn{1}{c|}{Med} & \multicolumn{1}{c|}{MAD} & \multicolumn{1}{c|}{Med} & \multicolumn{1}{c|}{MAD} & \multicolumn{1}{c|}{Med} & \multicolumn{1}{c|}{MAD} & \multicolumn{1}{c|}{Med} & \multicolumn{1}{c|}{MAD} & \multicolumn{1}{c|}{Med} & \multicolumn{1}{c|}{MAD} \\ \hline
FM(10)      & 261                      & 112                      & 465                      & 319                      & 2,222                     & 1,111                     & 205                      & 105                      & 285                      & 149                      & 894                      & 472                      & 1,649                     & 747                      & 11,440                    & 9,861                     & 33,734                    & 13,748                    \\ \hline
DH(10)      & 358                      & 215                      & 278                      & 156                      & 885                      & 370                      & 217                      & 119                      & 200                      & 129                      & 277                      & 75                       & 3,107                     & 2,569                     & 2,859                     & 2,194                     & 8,156                     & 4,431                     \\ \hline
FM(5)+DH(5) & 303                      & 160                      & 323                      & 170                      & 610                      & 264                      & 206                      & 105                      & 267                      & 135                      & 249                      & 73                       & 2,685                     & 2,091                     & 3,896                     & 2,992                     & 7,439                     & 4,247                     \\ \hline
\end{tabular}
}
\caption{Shows the median and MAD numbers of A* node expansions for different maps using three different heuristics with equal memory resources on $1000$ random instances. FM($10$) denotes the FastMap heuristic with $10$ dimensions, DH($10$) denotes the Differential heuristic with $10$ pivots and FM($5$)+DH($5$) is a combined heuristic which takes the maximum of a $5$-dimensional FastMap heuristic and a $5$-pivot Differential heuristic. The results are split into bins according to winners (along with their number of wins).}
\label{table1}
\end{table*}

\begin{figure*}[!]
\centerfloat
 
  \begin{subfigure}[b]{0.3\textwidth}
    \centering
	\includegraphics[width=0.9\textwidth]{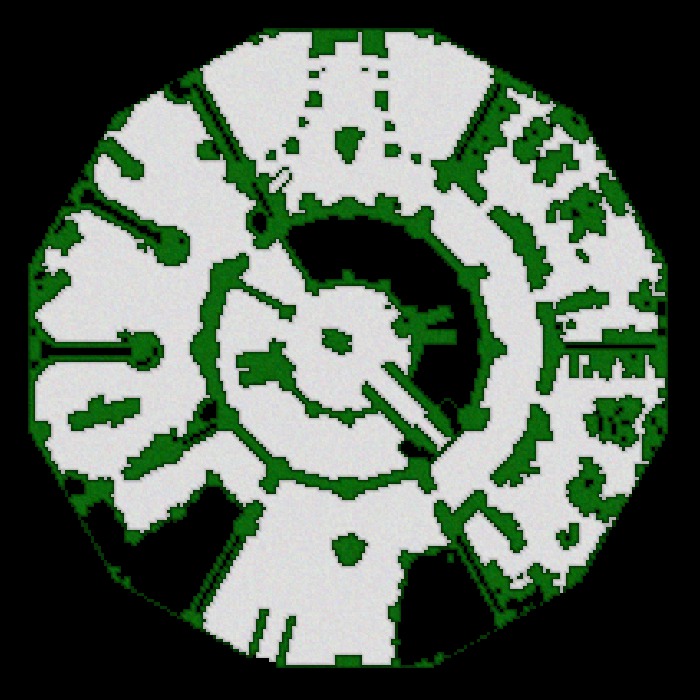}
    \caption{}
  \end{subfigure}
  ~
  \begin{subfigure}[b]{0.35\textwidth}
    \centering
	\includegraphics[width=1\textwidth]{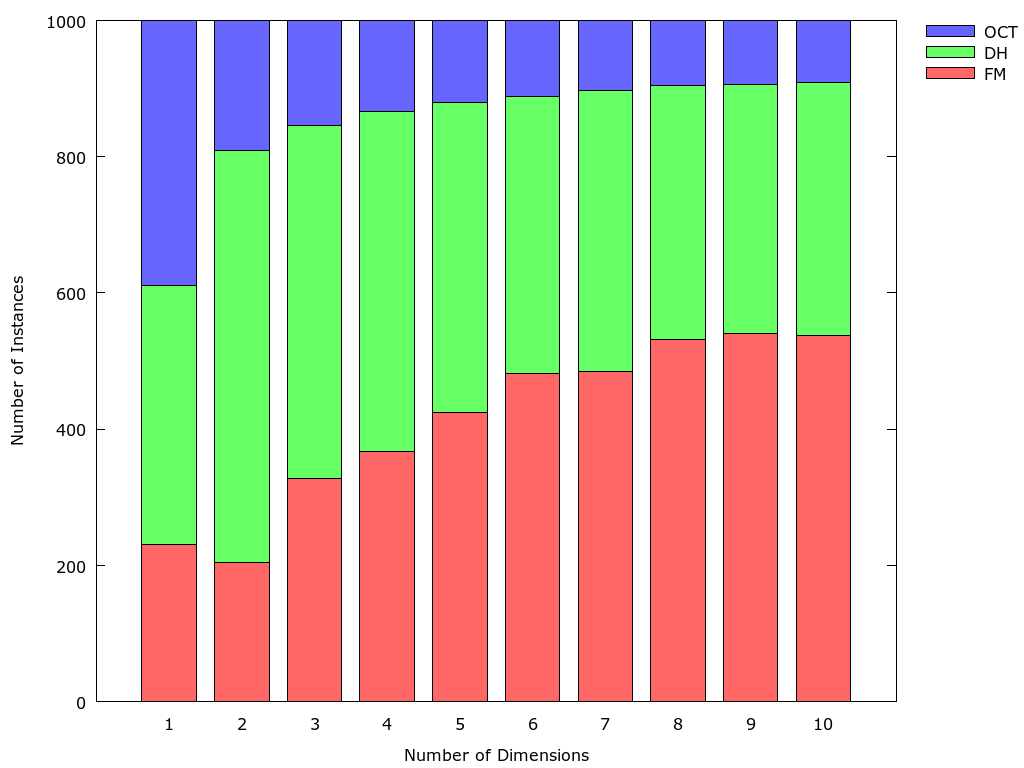}
    \caption{}
  \end{subfigure}
  ~
  \begin{subfigure}[b]{0.35\textwidth}
    \centering
	\includegraphics[width=1\textwidth]{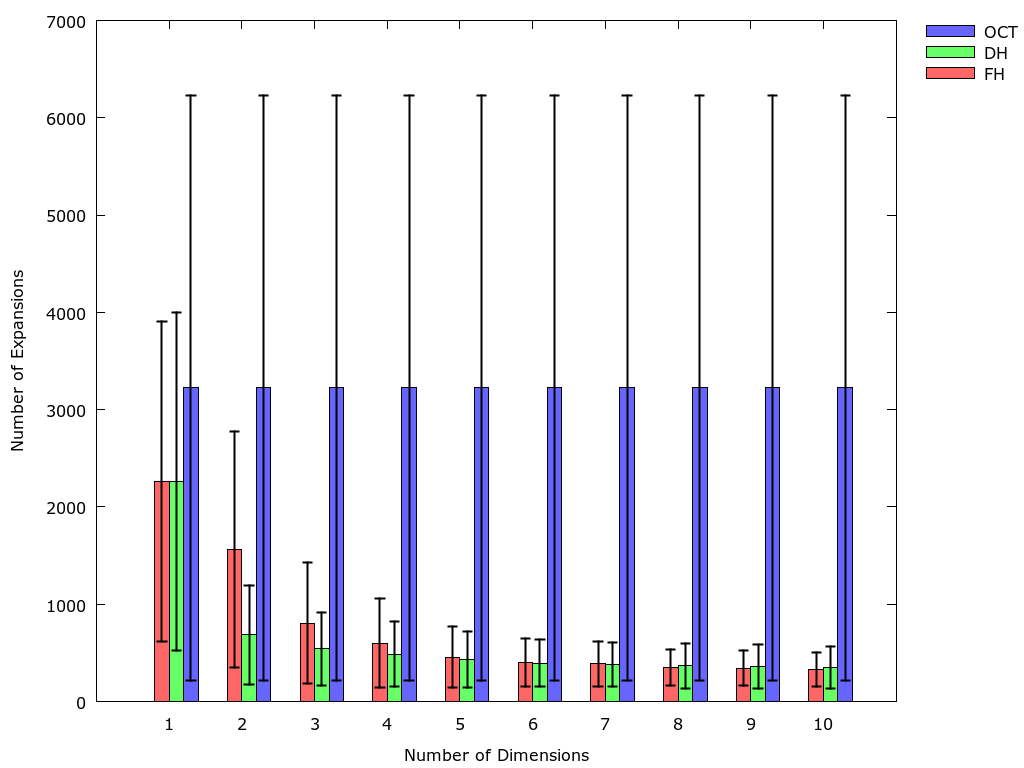}
    \caption{}
  \end{subfigure}
  
    \vspace{0.5cm}

  \begin{subfigure}[b]{0.3\textwidth}
    \centering
	\includegraphics[width=0.9\textwidth]{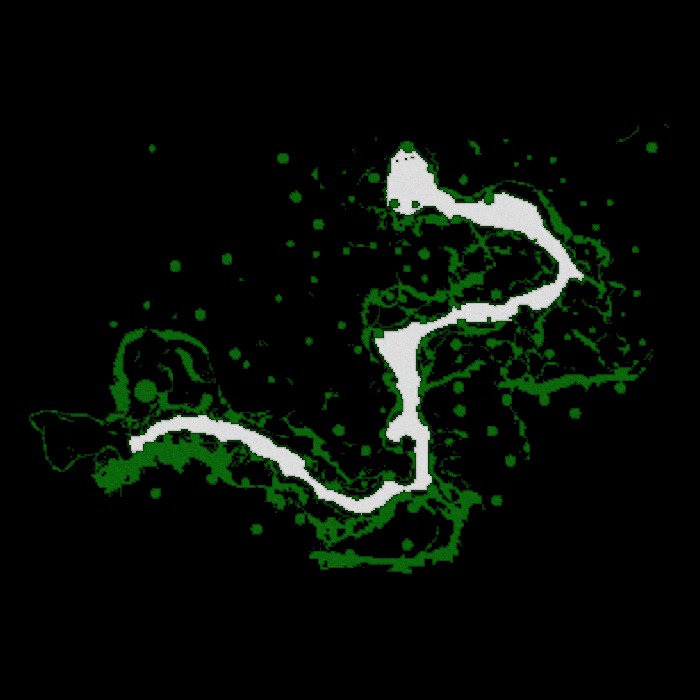}
    \caption{}
  \end{subfigure}
  ~
  \begin{subfigure}[b]{0.35\textwidth}
    \centering
	\includegraphics[width=1\textwidth]{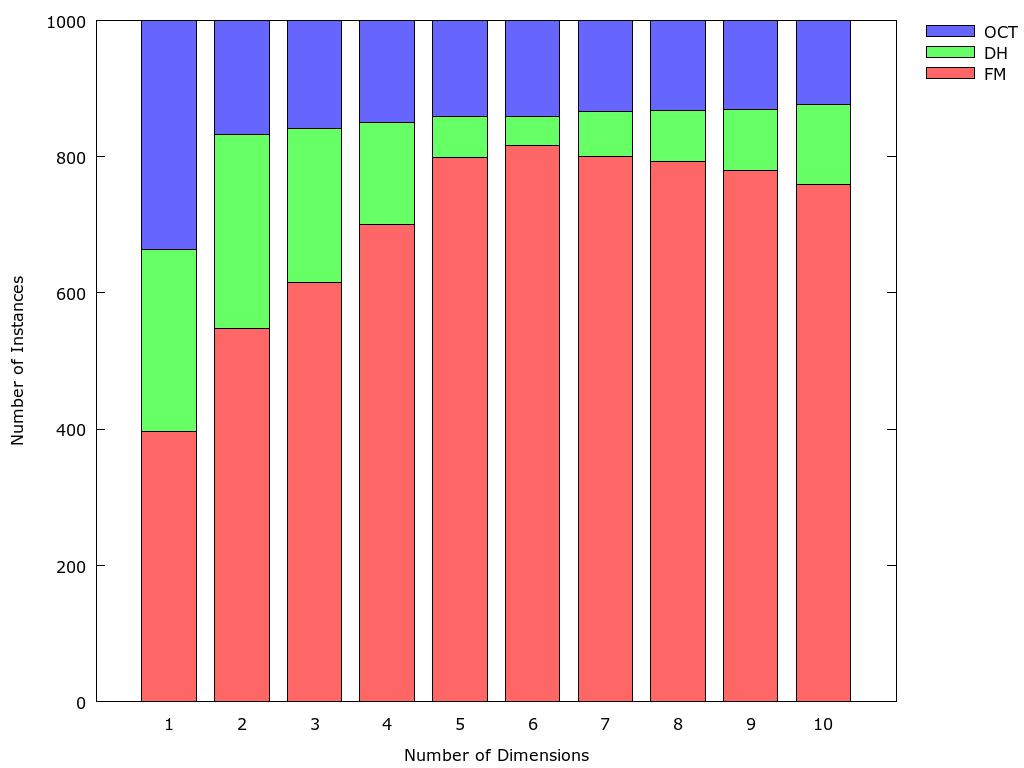}
    \caption{}
  \end{subfigure}
  ~
  \begin{subfigure}[b]{0.35\textwidth}
    \centering
	\includegraphics[width=1\textwidth]{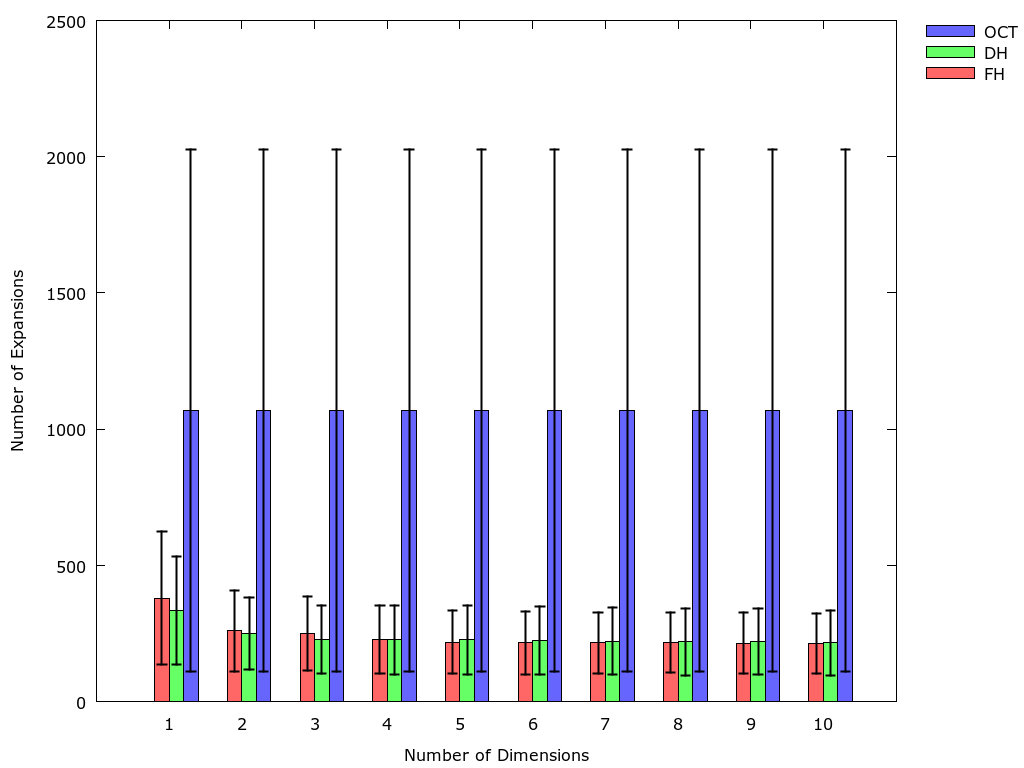}
    \caption{}
  \end{subfigure}
  
  \vspace{0.5cm}
  
  \begin{subfigure}[b]{0.3\textwidth}
    \centering
	\includegraphics[width=0.9\textwidth]{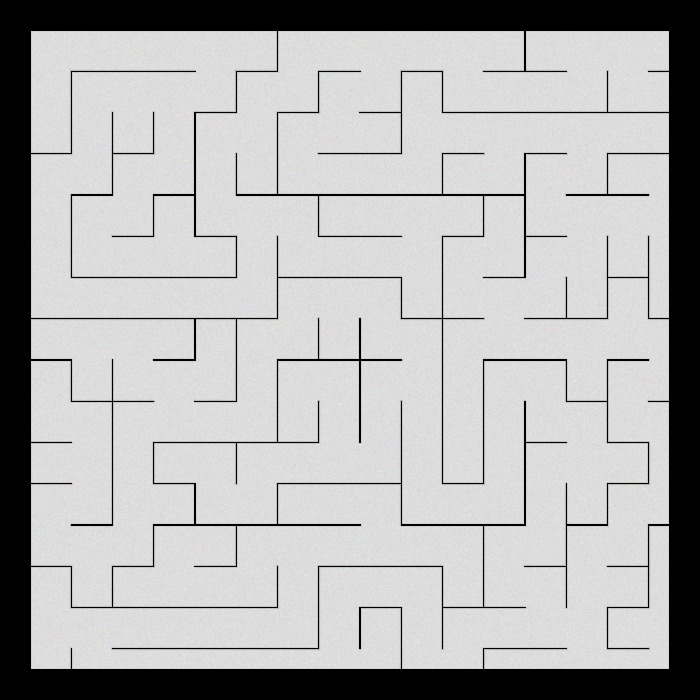}
    \caption{}
  \end{subfigure}
  ~
  \begin{subfigure}[b]{0.35\textwidth}
    \centering
	\includegraphics[width=1\textwidth]{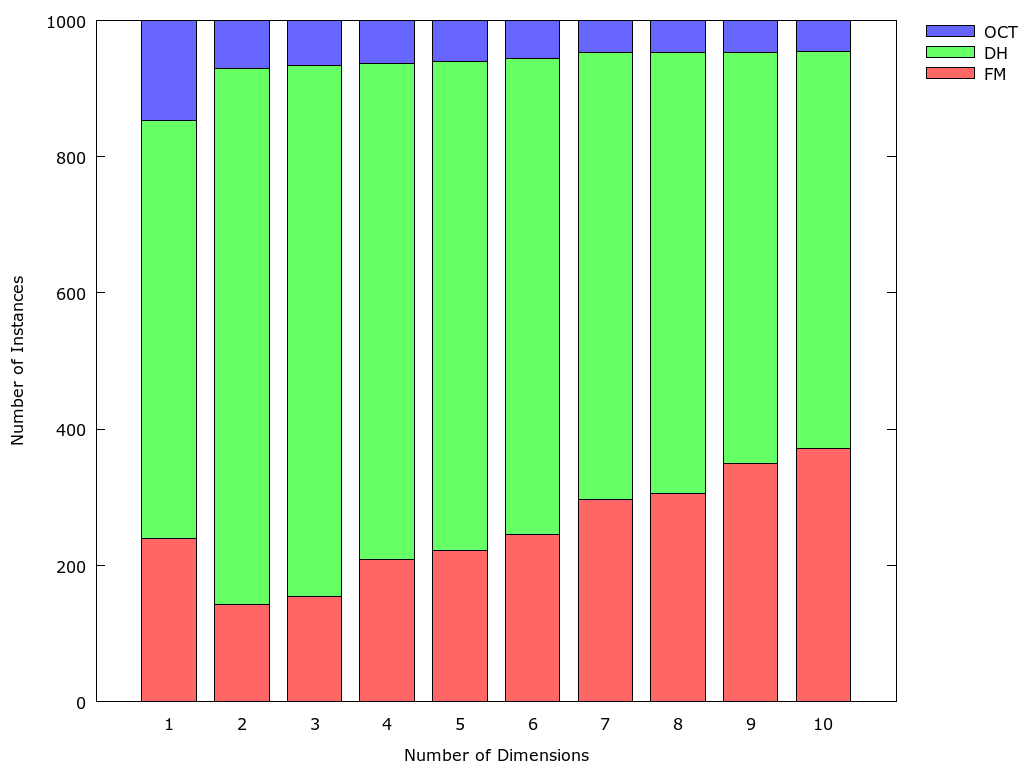}
    \caption{}
  \end{subfigure}
  ~
  \begin{subfigure}[b]{0.35\textwidth}
    \centering
	\includegraphics[width=1\textwidth]{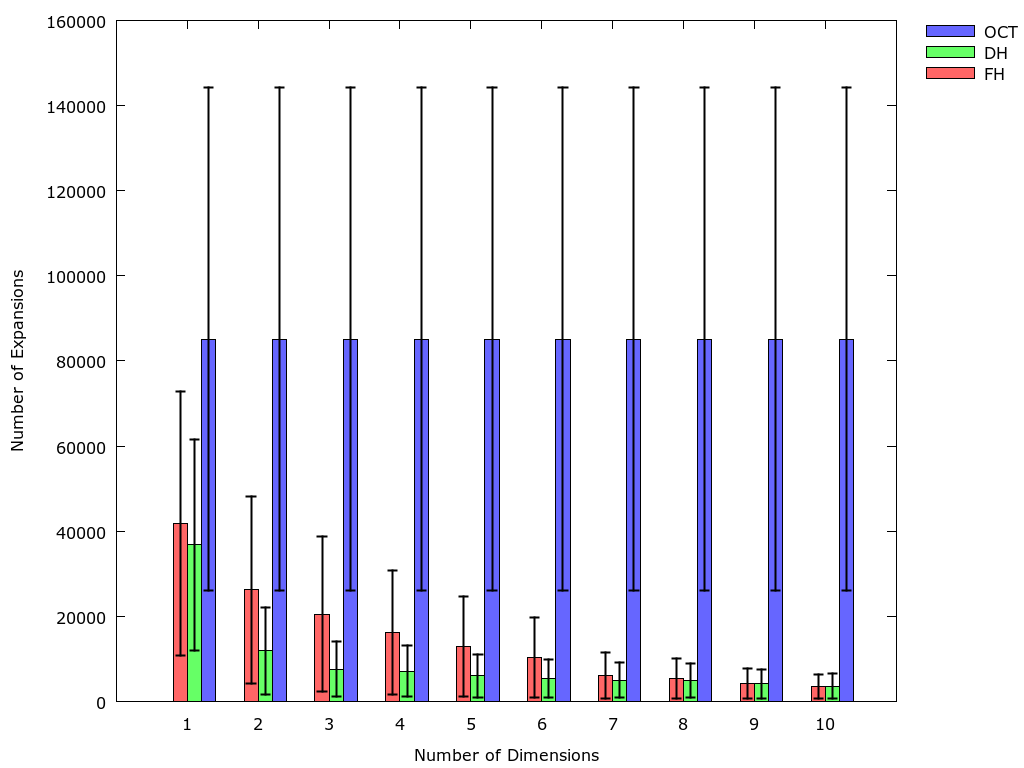}
    \caption{}
  \end{subfigure}
    
  \caption{Shows empirical results on $3$ maps from Bioware's Dragon Age: Origins. (a) is map `lak503d' containing $17,953$ nodes and $33,781$ edges; (d) is map `brc300d' containing $5,214$ nodes and $9,687$ edges; and (g) is map `maze512-32-0' containing $253,840$ nodes and $499,377$ edges. In (b), the x-axis shows the number of dimensions for the FastMap heuristic (or the number of pivots for the Differential heuristic). The y-axis shows the number of instances (out of $1,000$) on which each technique expanded the least number of nodes. Each instance has randomly chosen start and goal nodes. (c) shows the median number of expanded nodes across all instances. Vertical error bars indicate the MADs. The figures in the second and third rows follow the same order. In the legends, ``FM'' denotes the FastMap heuristic, ``DH'' denotes the Differential heuristic and ``OCT'' denotes the Octile heuristic.}
  \label{fig_experiments}
\end{figure*}

\section{Conclusions}
In this paper, we presented a near-linear time preprocessing algorithm, called FastMap, for producing a Euclidean embedding of a general edge-weighted undirected graph. At runtime, the Euclidean distances were used as heuristic by A* for shortest path computations. We proved that the FastMap heuristic is admissible and consistent, thereby generating shortest paths. FastMap produces the Euclidean embedding in near-linear time, which is significantly faster than competing approaches for producing Euclidean embeddings with optimality guarantees that run in cubic time. We also showed that it is competitive with other state-of-the-art heuristics derived in near-linear preprocessing time. However, FastMap has the combined benefits of requiring only near-linear preprocessing time and producing explicit Euclidean embeddings that try to recover the underlying manifolds of the given graphs.

\section{Acknowledgments}
The research at USC was supported by NSF under grant numbers 1724392, 1409987, and 1319966. The views and conclusions contained in this document are those of the authors and should not be interpreted as representing the official policies, either expressed or implied, of the sponsoring organizations, agencies or the U.S.  government.

\bibliographystyle{named}
\bibliography{LirBib}

\end{document}